\newcommand{\A}{A} 
\newcommand{\N}{\mathcal{N}} 
\newcommand{\B}{B} 
\renewcommand{\P}{P} 
\newcommand{\Prof}{{\boldsymbol B}} 
\newcommand{\FeasB}{{\mathcal{B}_F}}
\newcommand{\RatB}{{\mathcal{B}_R}}
\newcommand{\R}[2]{\mathcal{R}^{{#1}}_{{#2}}}
\newcommand{\simple}{\textit{simple}}
\newcommand{\weight}{\textit{weight}}
\newcommand{\wswap}{\textbf{\textit{w-swap}}}
\newcommand{\swap}{\textbf{\textit{swap}}}
\newcommand{\CC}{\textbf{\textit{CC}}}
\newcommand{\citep}[1]{\cite{#1}}
 \newtheorem{example}{Example}
 \newtheorem{proposition}{Proposition}
 \newtheorem{definition}{Definition}
 \newtheorem{lemma}{Lemma}
\title{Collective Discrete Optimisation\\ as Judgment Aggregation}
\author[1]{Linus Boes}
\author[2]{Rachael Colley}
\author[2]{Umberto Grandi}
\author[3]{J\'er\^ome Lang}
\author[4]{Arianna Novaro}
\affil[1]{Institut f\"ur Informatik
Heinrich-Heine-Universit\"at D\"usseldorf, Germany. linus.boes@uni.duesseldorf.de}
\affil[2]{IRIT, University of Toulouse, France. \{rachael.colley,umberto.grandi\}@irit.fr}
\affil[3]{CNRS, PSL, France. jerome.lang@lamsade.dauphine.fr}
\affil[4]{Centre d’Economie de la Sorbonne (CES), University of Paris 1 Panth\'eon-Sorbonne, France. arianna.novaro@univ-paris1.fr}
\begin{document}
\maketitle

\begin{abstract}
Many important collective decision-making problems can be seen as multi-agent versions of discrete optimisation problems. Participatory budgeting, for instance, is the collective version of the knapsack problem; other examples include collective scheduling, and collective spanning trees. Rather than developing a specific model, as well as specific algorithmic techniques, for each of these problems, we propose to represent and solve them in the unifying framework of judgment aggregation with weighted issues. We provide a modular definition of collective discrete optimisation (CDO) rules based on coupling a set scoring function with an operator, and we show how they generalise several existing procedures developed for specific CDO problems. We also give an implementation based on integer linear programming (ILP) and test it on the problem of collective spanning trees.
\end{abstract}


\section{Introduction}

Combinatorial discrete optimisation had a tremendous impact on various branches of the economy and society: routing, network design, supply chains, resource and task allocation, navigation and path-finding, scheduling workers, and many more. 
The input of these problems include a description of the objectives to be optimised (in terms of numerical costs and rewards)  with respect to certain constraints, such as: a budget is not exceeded, schedules are consistent, and spanning trees do not contain cycles. 
While in many settings these objectives pertain to a single agent, who is often a central planner, a variety of situations involve a number of agents who have their own objectives which can be conflicting.
A notable example is participatory budgeting (see, e.g., Talmon and Faliszewski~\cite{talmon2019framework}), where a set of projects have to be budgeted given the agents' preferences. Another example is collective scheduling, where a collective decision is to be taken on the order in which projects will be executed and perhaps which projects should be scheduled at all (see, e.g., Pascual et al.~\cite{PascualEtAlAAMAS2018}). 

Judgment aggregation is a unified framework in which collective decisions on a set of binary issues can be obtained from the individual judgments of a set of agents (see, e.g.,  Endriss~\cite{Endriss2016}).
This setting seems to be a suitable candidate to deal with the combinatorial aspects common in discrete optimisation.
Although in its classical formulation all issues have equal weight, a recent paper by  Nehring and Pivato~\cite{nehring2018median} considered weighted issues to quantify their relative importance.
In this paper we define a class of problems named \emph{collective discrete optimisation (CDO) problems}---many of which are multi-agent versions of classical discrete optimisation problems already extensively studied in the literature---and we show that they can all be phrased in the language of judgment aggregation with weighted issues.


A good related example of such generalisation to weighted issues is the one from multi-winner elections to participatory budgeting \cite{talmon2019framework}: both aim at selecting a set of items (candidates or projects) given a maximum capacity, but while multi-winner elections treat candidates uniformly,
projects in participatory budgeting have different costs. 
Thus, participatory budgeting rules can be seen as weighted extensions of multi-winner rules. Similarly, Pascual et al.~\cite{PascualEtAlAAMAS2018} generalise social welfare functions, whose output is a ranking of items, to take into account  tasks' durations for collective scheduling. {\em Our paper shows that there is a common logic behind these generalisations}: we take a more general perspective showing that all these settings can be represented under common definitions of CDO-problems, solved by CDO-rules.

\paragraph{Our Contribution.}
We give a general formulation for collective discrete optimisation problems---i.e., optimisation problems receiving possibly conflicting inputs from multiple agents---and present a modular definition of rules to solve them, based on existing work in judgment aggregation. We also provide equivalence results that show how known judgment aggregation rules correspond to our modular CDO rules (Section~\ref{sec:model}).
We survey a number of CDO problems that were previously studied independently in the literature and show that they can all be represented in our framework, most notably: participatory budgeting, collective scheduling, and collective spanning trees.
Taking one step further, we show how numerous rules defined for specific CDO settings are instances of our modular CDO rules (Section~\ref{sec:CDO}). 
Finally, we present initial experimental results comparing an implementation in integer linear programming (ILP) of three of our rules on the problem of collective spanning trees (Section~\ref{sec:ILP}).



\section{Related Work}\label{sec:related}

There are two main streams of related works. On the one hand, we have papers adapting classical judgment aggregations to deal with weighted issues. On the other, we have papers studying specific collective discrete optimisation settings. 


First, Rey et al.~\cite{rey2020designing} provide efficient and exhaustive embeddings of participatory budgeting problems via DNNF circuits in non-weighted judgment aggregation, giving an initial axiomatic study of asymmetric additive rules extended from known judgment aggregation rules. 
This approach is similar to our own and it proves effective for participatory budgeting; however, it does not generalise easily to other settings. We comment on the use of non-weighted judgment aggregation for collective optimisation in Section~\ref{sec:equivja} by showing the connection to our rules.
%

Nehring and Pivato~\cite{nehring2018median} are the first to introduce and study a setting of judgment aggregation where the issues are weighted, allowing them to model states with different probabilities, criteria of different importance, or simply issues that count more than others. The focus of the authors is the median rule, of which they provide an axiomatisation in the weighted setting. 
We use their definitions to build our general framework for collective discrete optimisation, and we show that  two of our the rules are equivalent to the median rule that they axiomatise. 

Second, to the best of our knowledge, our paper is the first attempt to develop 
 a general-purpose framework to solve a variety of seemingly disconnected problems of collective discrete optimisation, which we survey here. 
The {\em collective selection of weighted projects under a budget} has been studied from numerous perspectives, and a growing literature is focusing on participatory budgeting (see, e.g., the recent survey by Aziz and Shah~\cite{aziz2021participatory}). 
Klamler et al.~\cite{klamler2012committee} study the problem of selecting a committee of size~$k$ from a collective ranking over the candidates, given that candidates have an associated weight (or cost). 
Klamler et al.~\cite{benabbou2016solving} study the multi-agent knapsack problem where the agents' approval sets are elicited incrementally. 
Talmon and Faliszewski~\cite{talmon2019framework} propose nine rules\footnote{Two rules were then shown equivalent by Baumeister et al.~\cite{baumeister2020irresolute}.} for the specific setting of participatory budgeting, by combining different aggregation functions (e.g., greedy and proportional) with different measures of  agent's satisfaction.

Darmann et al.~\cite{DarmannEtAlCOMSOC2008} introduced the problem of adapting voting rules to compute \emph{collective spanning trees} for a set of agents who have preferences over the connections in a given network. Darmann et al.~\cite{darmann2009maximizing} studied the computational complexity of the Borda count 
 to find a spanning tree that maximises the minimum  voter's satisfaction. 
Escoffier et al.~\cite{escoffier2013fair} studied a number of multi-agent combinatorial optimisation problems (including variants of finding a maximum spanning tree), proposing algorithms to find solutions which maximise the minimal utility of an agent. 

Ephrati et al.~\cite{ephrati1993multi} proposed a dynamic iterative search procedure for {\em group planning} aimed at maximising social welfare, where agents do not have to disclose their preferences all at once. 
Klamler and Pferschy~\cite{klamler2007traveling} applied algorithms based on voting rules to find a collective path for agents whose preferences over the edges differ.
Pascual et al.~\cite{PascualEtAlAAMAS2018} studied the combinatorial optimisation problem of {\em collective scheduling}, by proposing the use of rules inspired from social choice theory---such as positional scoring rules, the Kemeny rule, and rules satisfying the Condorcet principle.

\section{The Model}\label{sec:model}

We present here the basic definitions of judgment aggregation with weighted issues and we introduce three families of collective discrete optimisation rules. We then provide equivalence results between our rules and judgment aggregation rules.

\subsection{Basic Definitions}\label{sec:basic_def}
In classical judgment aggregation some agents take a collective decision over a set of possibly interconnected issues, represented as propositional formulas or variables linked by a constraint (cf. the introduction by Endriss~\cite{Endriss2016}). 
In weighted judgment aggregation each issue is associated to a numerical weight (see Nehring and Pivato~\cite{nehring2018median}). 

We thus have a set of $n$ agents (or voters) $\N=\{1,\dots, n\}$ who take a collective decision on which of the $m$ items (or projects, or issues) to accept from the agenda~$A$.
We denote an agenda item as $a \in \A$, and it can denote, for example, a project to be funded, an event to be scheduled, or a connection between nodes of a network. Each $a\in \A$ has an associated weight\footnote{Although  Nehring and Pivato~\cite{nehring2018median} assume real-valued weights, integer weights allow us to use compact representation languages for the constraints; a generalisation to real-valued weights is straightforward.} $w_a \in \mathbb{Z}$, which can represent a project's cost or an event's duration.
The weight vector $W$ summarises the weights of all the $m$ items in the agenda~$\A$. The notation $X(a)$ refers to the entry of vector $X$ for issue $a$. 

For an agenda $\A$ and an agent $i \in \N$, an agent's ballot is a vector $\B_i \in \{0,1\}^m$. 
 The collection of the agents' ballots is a profile $\Prof = (\B_1, \dots, \B_n)$. 
Constraints can be imposed on a weighted judgment aggregation problem, either on the collective outcome (e.g., abiding by a budget constraint), or on the individual ballots (e.g., approving a minimal number of items). 
Following Endriss~\cite{endriss2018judgment} we call the former \emph{feasibility} constraints and  the latter \emph{rationality} constraints.  $\RatB$ denotes the set of all ballots satisfying the rationality constraints, whereas $\FeasB$ is the set of outcomes satisfying the feasibility constraints. 
In what follows, we assume that constraints are expressed compactly as a set of linear inequalities. 

In the applications this paper is concerned about, we consider $\B_i(a)=1$ to be interpreted as the approval of $a$, while $\B_i(a) = 0$ is interpreted as a non-approval or an abstention on $a$ (rather than a disapproval). For instance, in participatory budgeting, a voter not approving a project often means that they are indifferent to it.
This differs from the usual interpretation in judgment aggregation where a $0$ is interpreted as a vote against an issue, which might be one of the reasons why judgment aggregation was not considered as an appropriate setting to model agents expressing approval ballots, although it is mostly a conceptual difference. 


%

\subsection{Collective Discrete Optimisation Rules}\label{sec:CDOrules}

A collective discrete optimisation (CDO) rule $\mathcal{R} : \RatB^n \rightarrow 2^\FeasB \backslash\{\emptyset\}$ is a function that associates every rational profile of individual ballots $\Prof$ with a set of feasible outcomes. Each outcome $X\in \{0,1\}^m$ is a vector denoting which items of $\A$ are accepted and which are rejected by $\mathcal{R}$ on $\Prof$. 

We now introduce three families of CDO-rules, following a modular approach. 
%
Each CDO-rule is composed of a set scoring function (or simply, a \emph{set scoring}, as they have been called by Dietrich~\cite{Dietrich2014}) and an operator. The set scorings are measures of an agent's satisfaction, comparing their individual ballot to a possible outcome.
\subsubsection{Set Scorings.}\label{sec:scoring}
A \emph{set scoring} function (or, a \emph{set scoring}) returns an agent's satisfaction, given their ballot, with respect to a set of agenda items (for instance, a set of accepted projects); i.e., it is a function $\textbf{s}: \{0,1\}^m \times \RatB \rightarrow \mathbb{R}$.\footnote{Our definitions align with the work of Dietrich~\cite{Dietrich2014} in non-weighted judgment aggregation.}
By slightly abusing set notation (when $A$ and $B$ are vectors  of the same length), we denote by $|A\cap B|$ the number of entries such that $A(x)=B(x)=1$, while $|A\setminus B|$ is the number of entries such that $A(x)=1$ and $B(x)=0$.

Given an agenda item $a$, a ballot $\B$, and a potential outcome $C$, the $\textbf{\simple}$ set scoring is defined as $\textbf{\simple}_\B(C)=\sum\limits_{a \in \A} C(a) \cdot \B(a)= |B \cap C|$. Note that this set scoring does not pay attention to weights. Its weighted generalisation $\textbf{\weight}$  uses the weight of the accepted issues to measure the satisfaction of a voter with respect to their ballot: $ \textbf{\weight}_\Prof(C)= \sum\limits_{a \in \A} C(a) \cdot \B(a) \cdot w_a$.


The $\swap$ set scoring is inspired from collective scheduling \citep{PascualEtAlAAMAS2018}. It detracts a point for every item which is approved in the agent's ballot~$\B$ but not in the candidate outcome~$C$, i.e., 
  $\swap_\B(C)= \sum\limits_{a\in A} -(1-C(a)) \cdot \B(a) = - |B \setminus C|.$ 
Then, we define $\wswap$ as a weighted version of $\swap$: 
$\wswap_\B(C)= \sum\limits_{a\in A} -w_a \cdot (1-C(a)) \cdot \B(a).$ The $\wswap$ set scoring is reminiscent of the \emph{tardy scoring} by Pascual et al.~\cite{PascualEtAlAAMAS2018}. It differs from the cost rules by Goel et al.~\cite{goel2019knapsack}, as we only look at all approved items being in $C$, whereas they count all swaps needed to turn a ballot into the outcome.


The final set scoring is based on the Chamberlin-Courant rule for approval ballots \citep{chamberlin1983representative,skowron2017chamberlin}, also known as the binary satisfaction measure \citep{talmon2019framework}. A score of $1$ is given if there is at least one item that is approved by both the agent's ballot $B$ and the candidate outcome $C$: namely, 
 $\CC_\B(C)= 1$ if $|B \cap C| \neq 0$, and $ \CC_\B(C)= 0$ otherwise.

 \begin{example}\label{ex:scorings}
 Take an agenda $A = \{a_1, \ldots, a_5\}$ with weights $W=(1,2,3,4,5)$ and let $i \in \N$ be a voter who supports $a_1$, $a_3$ and $a_5$, i.e., $B_i = (1,0,1,0,1)$. For a candidate outcome $C = (1,1,0,0,1)$ the set scorings are as follows:
 \begin{center}
 \begin{tabular}{c|ccccc|r}
 & $a_1$ & $a_2$ & $a_3$ & $a_4$ & $a_5$ & \\
 \hline
 $B_i$ & $1$ & $0$ & $1$ & $0$ & $1$ \\
 $C$ & $1$ & $1$ & $0$ & $0$ & $ 1$ \\
  \hline
 $\textbf{\simple}_{B_i}(C)$ & $+1$ & &&  & $+1$ & $2$ \\
  $\textbf{\swap}_{B_i}(C)$ &  & &$-1$&  &  & $-1$ \\
  $\textbf{\weight}_{B_i}(C)$ & $+1$ &&&& $+5$ & $6$ \\
  $\wswap_{B_i}(C)$ &&&$-3$&&& $-3$ \\ 
  $\CC_{B_i}(C)$ & $\checkmark$ &&&& $\checkmark$ & $1$\\
 \end{tabular}
 \end{center}

 We have $\textbf{\simple}_{B_i}(C) = |\{a_1, a_5\}| = 2$, as $a_1$ and $a_5$ are both supported by $i$ and in the candidate outcome $C$, and $\swap_{B_i}(C) = -|\{a_3\}| = -1$, as $a_3$ is approved by $i$  and not by $C$. For the weighted versions we multiply the item's contribution to the score by its respective weight, that is, $\textbf{\weight}_{B_i}(C) = w_{a_1} + w_{a_5} = 6$ and $\wswap_{B_i}(C) = -w_{a_3} = -3$. Finally, for the Chamberlin-Courant we have $\CC_{B_i}(C) = 1$, as there is at least one item in $C$ that $i$ supports, i.e., $|\{a_1, a_5\}| \neq 0$.
 \end{example}
 

\subsubsection{Operators.}\label{sec:CDO-op}

We define three operators which, combined with the set scorings defined above, give us the CDO-rules studied in this paper.
We take inspiration from three judgment aggregation rules: the median rule 
~\cite{nehring2018median} (generalising the Kemeny rule in preference aggregation
); the egalitarian rule 
~\cite{botan2021egalitarian} (i.e., the $d_H$-max rule in~\cite{LangPSTV15}); and the ranked agenda rule 
\cite{LangPSTV15,NehringP19,EveraereKM14}, generalising the ranked pairs rule in preference aggregation.
%
%

\paragraph{Sum Operator.}

Recalling that $\FeasB$ is the set of feasible outcomes, we define our first operator as such:

\begin{definition}[Sum Rules]\label{def:maxagg}
The sum operator $sum$, for a set scoring $\textbf{s}$, defines the following class of rules:
	$\R{sum}{\textbf{s}}(\Prof) = \arg\max\limits_{C\in \FeasB}(\sum\limits_{i \in \N}\textbf{s}_{B_i}(C)).$
\end{definition}



\paragraph{Egalitarian Operator.}

In some situations it is natural to measure the overall satisfaction by looking at 
the least satisfied agent. For example, if a group of friends are planning a trip, the dissatisfaction of a single agent might affect everyone negatively.
We thus define a class of rules induced by a maxi-min operator:

\begin{definition}[Egalitarian Rules]
	The egalitarian operator $egal$, paired with a set scoring $\textbf{s}$, defines the following rules:
	$\R{egal}{\textbf{s}}(\Prof) = \arg\max\limits_{C\in \FeasB} \min_{i \in \N}\textbf{s}_{B_i}(C).$
\end{definition}


\paragraph{Ranked Operator.} 
An intuitive way of taking collective decisions on  multiple issues is to iteratively add items to the outcome in descending order of support, 
only discarding an item if its addition would violate the constraints. Formally, we define:

\begin{definition}[Ranked Rules]
 The outcome of the $\R{rank}{\textbf{s}}(\Prof)$ rules for set scoring $\textbf{s}$ is given by 
 Algorithm~\ref{alg:ranked}.
\end{definition}

\begin{algorithm}[tb]
\caption{Algorithm for $\R{rank}{\textbf{s}}(\Prof)$ rules}
\label{alg:ranked}
\textbf{Input}: $\FeasB$, $\Prof$, $A$\\
\textbf{Output}: $\R{rank}{\textbf{s}}(\Prof)=X$
  \begin{algorithmic}[1]
    \STATE{ $S:=\emptyset$ and $X:=\{0\}^A$}
      \WHILE{$S\neq A$}
        \STATE $x:= \arg\max\limits_{a\in A\backslash S}\textbf{s}_\Prof(X_{+a}) $ 
       \IF{$X_{+x\upharpoonright (S \cup \{x\})} \in \FeasB_{\upharpoonright (S \cup \{x\})}$}
         \STATE$X(x)=1$ 
         \ENDIF
      \STATE $S\leftarrow S\cup\{x\}$
      \ENDWHILE
    \STATE \textbf{return} $\R{rank}{\textbf{s}}(\Prof)=X$   
  \end{algorithmic}
  \end{algorithm} 

Let $X_{\upharpoonright S}$ be the restriction of vector $X$ to the elements of set $S$, and let $X_{+a}$ be $X$ with $X(a)=1$. Algorithm~\ref{alg:ranked} initiates $X$ to a vector of rejections for every item of the agenda and $S$ to an empty set that will keep track of which items have been considered so far. 
The algorithm finds an item $x\in A\setminus S$ whose approval maximises the score of $X_{+x}$ and checks if the addition of $x$, when restricted to the items which have already been considered plus $x$, i.e., $X_{+x\upharpoonright (S \cup \{x\})}$, is extendable to a feasible outcome, i.e., $X_{+x\upharpoonright (S \cup \{x\})} \in \FeasB_{\upharpoonright (S \cup \{x\})}$. If it is, $x$ is accepted in the outcome $X$, letting $X(x)=1$. Then $x$ is added to $S$. The algorithm returns $\R{rank}{\textbf{s}}(\Prof)=X$ when $S=A$. 

\begin{example}\label{ex:greedy}
Take an agenda $A=\{a_1,a_2,a_3,a_4\}$ with $\FeasB = \{(1,0,0,1),(1,1,0,0),\allowbreak (0,1,1,0)\}$.
Let $\Prof$ be a profile with four voters where each $a_i$ is approved by exactly the first $i$ voters. We do not specify $W$ as we only consider $\textbf{\simple}$ (and $\swap$) set scorings.

At each step, $\R{rank}{\textbf{s}}$ selects the 
project with the highest score and adds it, unless it leads to an infeasible outcome.
The outcome of $\R{rank}{\textbf{\simple}}(\Prof)$ is $\{(1,0,0,1)\}$, where $a_4$ is added first, $a_1$ is added last. The other items are skipped as they are not feasible.

On the other hand,  $\R{sum}{\textbf{s}}$ selects those feasible outcomes which maximise the sum of the voters' scores. In our example, $\R{sum}{\textbf{\simple}}(\Prof) = \{(1,0,0,1),(0,1,1,0)\}$, as both yield an overall score of five, whereas $(1,1,0,0)$ yields a score of three. 

Finally, $\R{egal}{\textbf{s}}$ selects feasible outcomes which maximise the satisfaction of the least satisfied voter. Here we have $\R{egal}{\textbf{\simple}}(\Prof) = \{(1,0,0,1)\}$, as it is the only outcome where the least satisfied voter gets one item she approves. In contrast, $\R{egal}{\textbf{\swap}}(\Prof) = \FeasB$, as the first voter approves every item and in each feasible outcome there are exactly two items (that she likes) which are not approved. Thus, $\R{egal}{\textbf{\simple}}$ and $\R{egal}{\textbf{\swap}}$ can differ.
\end{example}


 
There is a plethora of other rules defined for specific problems which could be modeled as CDO-rules: one such example is the proportional greedy rule for participatory budgeting
from Talmon and Faliszewski~\cite{talmon2019framework}---we do not include it here, as the rule is harder to motivate for other settings. In Section~\ref{sec:equivja} we will show how some of our CDO rules coincide and that some are equivalent to judgment aggregation rules.



\subsection{Equivalence Results in Judgment Aggregation}\label{sec:equivja}

In this section we present 
results showing the equivalence of some of our rules with known judgment aggregation rules, 
where two rules are \emph{equivalent} when we can show that they are both instances of each other. 
We say that a rule $\R{}{}$ \emph{is an instance of} rule~$\R{'}{}$ if there exists a translation from the setting of $\R{}{}$ to the setting of $\R{'}{}$ such that the two rules give equivalent outcomes. 

\begin{lemma}\label{lem:equivalences}
The following equivalences hold:
\begin{enumerate}[(i)]
    \item The median rule  \citep{nehring2018median}, $\R{sum}{\textbf{\simple}}$, and $\R{sum}{\swap}$ are all equivalent;   
    \item $\R{sum}{\textbf{\weight}}$, $\R{sum}{\wswap}$, and the weighted median rule \citep{nehring2018median} are all equivalent;
    \item $\R{rank}{\textbf{\simple}}$ is equivalent to the ranked agenda rule in judgment aggregation~\cite{lang2013judgment}. 
    \end{enumerate}
\end{lemma}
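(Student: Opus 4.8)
The plan is to treat the three items separately, using two elementary facts throughout. First, two \emph{sum} rules return the same outcomes whenever their set scorings differ, voter by voter, by a quantity independent of the candidate outcome, so it is enough to compare the objectives $\sum_i\textbf{s}_{B_i}(\cdot)$ up to an additive constant. Since $\swap_{\B}(C)=\textbf{\simple}_{\B}(C)-|\B|$ and $\wswap_{\B}(C)=\textbf{\weight}_{\B}(C)-\sum_{a}w_a\B(a)$, the identity translation already yields $\R{sum}{\swap}\equiv\R{sum}{\textbf{\simple}}$ and $\R{sum}{\wswap}\equiv\R{sum}{\textbf{\weight}}$. Second, the (weighted) median rule --- which selects the admissible outcome(s) minimising the total (weighted) Hamming distance $d_H$ to the ballots --- can be matched with the corresponding \emph{sum} rule, in both directions, by an \emph{agenda-doubling} translation: each issue $a$ is split into two issues $a^{+},a^{-}$ and $\FeasB$ is augmented with $C(a^{+})+C(a^{-})=1$, which puts the feasible outcomes of the two instances in bijection and therefore preserves ``equivalent outcomes''. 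Given these facts, item (i) reduces to $\R{sum}{\textbf{\simple}}\equiv$ the median rule, item (ii) to $\R{sum}{\textbf{\weight}}\equiv$ the weighted median rule, and each item then follows by transitivity of $\equiv$ (composing translations).

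For ``the median rule is an instance of $\R{sum}{\textbf{\simple}}$'' I would use the \emph{faithful} doubling $B'_i(a^{+})=B_i(a)$, $B'_i(a^{-})=1-B_i(a)$, carrying the original admissibility constraint on the $a^{+}$-coordinates; a one-line expansion shows $\sum_i\textbf{\simple}_{B'_i}(C')=\sum_i|\{a: B_i(a)=C(a)\}|$, the total agreement, whose maximisers are exactly the median outcomes. For the converse, ``$\R{sum}{\textbf{\simple}}$ is an instance of the median rule'', I would instead use the \emph{all-reject} doubling $B'_i(\bar a)=0$ for every $i$ --- legitimate here because the paper keeps rationality and feasibility constraints separate, so individual ballots need not satisfy $C(a)+C(\bar a)=1$ --- whence $\sum_i d_H(B'_i,C')=\sum_i|B_i|+nm-2\sum_i\textbf{\simple}_{B_i}(C)$, the term $n|C|$ produced by the original issues being exactly cancelled by the penalty $n(m-|C|)$ coming from the shadow issues, so that distance minimisation coincides with $\R{sum}{\textbf{\simple}}$. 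Item (ii) is the verbatim weighted copy, with $w_{a^{+}}=w_{a^{-}}=w_a$ and the cancellation now between $n\sum_a w_a C(a)$ and $n\sum_a w_a(1-C(a))$.

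For item (iii) I would first unfold Algorithm~\ref{alg:ranked} with $\textbf{s}=\textbf{\simple}$: since $\sum_i\textbf{\simple}_{B_i}(X_{+x})=\sum_i|B_i\cap X|+|\{i:B_i(x)=1\}|$ and the first summand is common to all candidates $x\in A\setminus S$, the rule simply scans the items in non-increasing order of approval score (under a fixed tie-break) and accepts each one iff the partial outcome remains extendable to a feasible outcome --- precisely the consistency test of the ranked agenda rule. To show $\R{rank}{\textbf{\simple}}$ is an instance of the ranked agenda rule, map the CDO instance to the judgment aggregation instance on the same issues in which every negative literal $\neg a$ has support $0$ (the approval reading of a $0$ entry): the ranked agenda rule then processes all positive literals first, in exactly the order and with exactly the test of Algorithm~\ref{alg:ranked}, and the negative literals, handled afterwards, are already entailed by the accepted set (once a positive literal $a$ is found inconsistent, the accepted set entails $\neg a$), so the outcomes agree. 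For the converse, use the faithful doubling with $\FeasB$ carrying $C(a^{+})+C(a^{-})=1$: then $a^{+}$ has approval score $\mathrm{supp}(a)$ and $a^{-}$ has approval score $\mathrm{supp}(\neg a)$, and $\R{rank}{\textbf{\simple}}$ scans the $2m$ items in non-increasing support of the corresponding literal, trying to accept each --- which is the ranked agenda rule, the constraint $C(a^{+})+C(a^{-})=1$ making ``the partial outcome together with its $0$'s and $1$'s'' interchangeable with ``the set of accepted literals'' in the extendability test.

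The main obstacle I anticipate is bookkeeping rather than any single calculation: one must pick the shadow-ballot convention separately in each direction (faithful shadows to realise a distance-minimising rule as a sum rule, all-reject shadows to go the other way), verify that the induced bijection on feasible outcomes carries $\arg\max$ to $\arg\max$ (respectively $\arg\min$ to $\arg\min$), invoke the rationality/feasibility split to license the all-reject convention, and --- for (iii) --- check that the feasibility test of Algorithm~\ref{alg:ranked} really coincides with the judgment aggregation consistency test and that one fixed tie-breaking order is used on both sides.
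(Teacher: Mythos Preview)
Your treatment of (i) and (ii) is correct and essentially matches the paper. The observation that $\swap_B(C)=\textbf{\simple}_B(C)-|B|$ (and its weighted analogue) immediately yields the equivalence of the two \emph{sum} rules, just as the paper notes. For the cross-framework directions the paper also uses an agenda-doubling: from JA to CDO it introduces one item per literal, and from CDO to JA it introduces an auxiliary issue $a^*$ with the output constraint $a^s\leftrightarrow a^*$ and the convention that every ballot \emph{approves} $a^*$. Your ``all-reject'' convention on the shadow issue is the dual of this and produces the same Hamming-distance computation, so there is no substantive difference.

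For (iii), your JA $\to$ CDO direction via faithful doubling is fine and is exactly what the paper does. The gap is in your CDO $\to$ JA direction. You propose to ``map the CDO instance to the judgment aggregation instance on the same issues in which every negative literal $\neg a$ has support $0$'', justifying this by the approval reading of a $0$ entry. But the ranked agenda rule of Lang and Slavkovik is defined for standard judgment aggregation with complete ballots: a voter who does not accept $a$ \emph{does} support $\neg a$, so $\mathrm{supp}(a)+\mathrm{supp}(\neg a)=n$ always, and you cannot force $\mathrm{supp}(\neg a)=0$ without trivialising the profile. In particular, nothing prevents a negative literal from outranking a positive one (whenever an item has minority approval), in which case the ranked agenda rule and Algorithm~\ref{alg:ranked} process items in different orders and need not agree. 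The paper resolves this with a different translation: it keeps the same agenda but \emph{adds $n{+}1$ extra voters who approve every item}. This shifts every positive support into the interval $[n{+}1,2n{+}1]$ while leaving every negative support in $[0,n]$, so the ranked agenda rule is forced to exhaust all positive literals first, in the original support order, before touching any negative literal---at which point the negative literals are indeed determined by the accepted set, as you correctly argue. Your final paragraph anticipates bookkeeping obstacles but not this one; the missing idea is precisely the dummy-voter padding.
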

Due to these equivalences we will now refer to our CDO-rules by their names in judgment aggregation, e.g., $\R{sum}{\textbf{\simple}}$ will be referred to as the median rule.

\begin{proof}[sketch]
For statement (i), the equivalence between $\R{sum}{\textbf{\simple}}$ and $\R{sum}{\swap}$  is clear given the definitions of the rules. 
For equivalence with the median rule, we show that the median rule is an instance of $\R{sum}{\textbf{\simple}}$ and vice versa. For the left-to-right direction, we translate the judgment aggregation framework by  Nehring and Pivato~\cite{nehring2018median} to our setting. We define two agenda items for each judgment aggregation issue (one corresponding to the acceptance and another for the rejection of the issue).
The respective outcomes are equivalent, as both rules maximise an overall score, which only differs by constant factors. 
For the right-to-left direction, we translate our setting to the same framework of judgment aggregation, where every agenda item $a$ in our setting is translated into two judgment aggregation issues, $a^s$ and an auxiliary issue $a^*$ to bias the outcome towards acceptance of $a^s$. This can be done by imposing constraints that the output abides by $a^s \leftrightarrow a^*$ and that every ballot approves $a^*$. Analogously to the other direction, the sum over all issues for a candidate outcome coincides with the simple set scoring (modulo some constant factors, which can be ignored while maximizing).

The proof of (ii) follows the same structure as (i), only that we impose weights on every issue in the translation. The constant factors that the rules differ by changes, i.e., in both directions both rules now use the total weight of all items.

For (iii) the translation used in the direction left-to-right of (i) also holds. For the other direction we use a different translation, where an additional $n+1$ voters, who approve all items of the agenda, are added to the profile.
\end{proof}

%

 One slightly surprising consequence of Lemma~\ref{lem:equivalences} is that two pairs of rules with seemingly completely different measures of voters' satisfaction (namely the pair $\textbf{\simple}$ and $\swap$, and the pair $\textbf{\weight}$ and $\wswap$), give the same outcomes when paired with the $sum$ operator. Moreover, by Lemma~\ref{lem:equivalences} we have that the axiomatic characterisations for the median and the weighted median rules given by Nehring and Pivato~\cite{nehring2018median} also hold for $\R{sum}{\textbf{\simple}}$ and $\R{sum}{\swap}$, and $\R{sum}{\textbf{\weight}}$ and $\R{sum}{\wswap}$, respectively. Observe however that there is no such equivalence between the egalitarian median rule,  $\R{egal}{\textbf{\simple}}$ and $\R{egal}{\swap}$---as we have seen in Example~\ref{ex:greedy} that the two CDO-rules do not coincide.



The first two statements of Lemma~\ref{lem:equivalences} show that our setting and the one by Nehring and Pivato~\cite{nehring2018median} can be used interchangeably for some rules. Our choice of model is motivated by rephrasing their model to act as a clear intermediary step between different CDO domains---whereas standard judgment aggregation cannot show this clear link. We will prove in Section~\ref{sec:comparing} that domain-specific CDO-rules are instances of known judgment aggregation rules: this step which seems intuitive in our unifying model, would not be as obvious in judgment aggregation. Although we highlight only a handful of CDO-problems, which use judgment aggregation rules, our model could act as a stepping stone to unfold connections to countless other CDO-rules more naturally.

One reason why our model is more clearly connected to typical CDO-problems is its biased domain $\{0,1\}$, instead of $\{-1,1\}$. Furthermore, given the modular setup of the rules, our model is well-equipped to measure an individual's satisfaction (whereas Nehring and Pivato~\cite{nehring2018median} group voters' ballots together, removing the possibility of non-additive measures of satisfaction) and to get natural extensions of judgment aggregation rules by altering the set scorings (depending on the specific needs of a CDO problem).
One could reduce the weighted median rule to its unweighted version by introducing $w_i$ copies of an item of weight $w_i$ and adding constraints making these copies equivalent. However, this translation would be exponentially large, and would not generalise to all rules (it notably fails for ranked rules).


\section{Collective Discrete Optimisation}\label{sec:CDO}

In this section we show how a number of seemingly unrelated problems (collective selection, collective scheduling, and collective network design) can be phrased in the setting of weighted judgment aggregation described in Section~\ref{sec:model}. 
We then show that the rules studied in each specific setting are related to one another, since they can be interpreted as instances of the CDO-rules we introduced in Section~\ref{sec:CDOrules}.

\subsection{CDO Problems}\label{sec:agendaconstraint}

We explain in this section how to embed a variety of CDO problems into our setting.

\paragraph{Collective Selection (Participatory Budgeting).}
Agents select projects to be funded by a given resource: e.g., in participatory budgeting agents approve projects to be funded by a limited budget.
The \emph{selection agendas} $\A=\{a_1, \dots, a_m\}$ contain items representing a set of $m$ projects, such that the weight $w_{a_p}$ associated with item $a_p$ represents the cost of project $p$.   
The weight of the collectively selected projects must not exceed the budget limit $\ell \in \mathbb{N} $. Hence, an outcome $X\in \FeasB$ has to satisfy the following equation:
$ \sum_{a\in A} X(a) \cdot w_a\leq \ell$.    
Additional feasibility constraints could also specify, for instance, that a percentage of the budget must be spent on projects in a given area.\footnote{E.g., part of the budget for the PB campaign in Paris was reserved to low-income districts \citep{cabannes2017participatory}.}

If there are no rationality constraints, $\RatB=\{0,1\}^m$, agents can approve any number of items (regardless of their costs); if $\RatB=\FeasB$, agents must submit their ideal allocation (under the budget limit). 
%
Other collective selection problems include collective knapsack or knapsack voting \citep{goel2019knapsack}, where rationality and feasibility constraints coincide, and weighted committee selection \citep{klamler2012committee}.


\paragraph{Collective Networking.} In collective networking, 
agents have to collectively decide on how to design a communal network---be it water pipelines, internet services, or travel connections between countries. The agents specify which connections they approve of in a network, and the goal is to find a spanning tree from such input, i.e., an undirected acyclic graph that includes all nodes, maximising the satisfaction of the agents. 
This problem has been introduced and studied by Darmann et al.~\cite{DarmannEtAlCOMSOC2008,darmann2009maximizing}, building on the vast literature on spanning trees in both computer science and economics.

Given an undirected network $G=(V,E)$ a \emph{networking agenda} is the set of items $A=\{a_{ij}\mid (i,j)\in E\}$, where $w_{a_{ij}}$ is the cost of adding edge $(i,j)$ to the outcome.  Darmann et al.~\cite{darmann2009maximizing} consider weighted edges but no budget limit, assuming that the central authority will fund any outcome.

The feasibility constraints require that the set of accepted edges forms an acyclic connected graph (note that a budget limit can also be imposed). These constraints can be formulated in ILP in many ways; we here focus on the single commodity flow model by Abdelmaguid~\cite{abdelmaguid2018efficient}.
Using this formulation we first move from undirected to directed graphs, and we then forget the direction of the edges to obtain the collective spanning tree.
There are $|E|$ variables $a_{ij}$ stating whether $(i,j)$ is in the spanning tree, and $2|E|$ variables $y_{ij}$ and $y_{ji}$ in set $Y$ for the two directions of each edge in $E$. Each $y_{ij}\in \mathbb{N}$ describes the flow going from node $i$ to node $j$. 
We have $|V|$ constraints of the following form, for $j\in V$:
\begin{equation*}\label{eq:spantree1}
{\sum}_{i : (i,j)\in E} (y_{ij}-y_{ji})=\begin{cases} 1-|V|, & \text{ if } j=1\\
 1, & \text{ otherwise}
\end{cases}
\end{equation*}

The first case accounts for the (artificial) root of the tree $j=1$, having no in-flowing edges. Thus, $y_{i1} = 0$ for all $(i, 1) \in E$ and 
the out-flowing edges total a weight of $|V|-1$. The second case ensures that in a spanning tree the in-flowing weight exceeds the out-flowing by one.
The next constraints ensure that directed edges correspond to the undirected edges:
    $y_{ij}\leq (|V|-1)x_{ij} \text{ and } y_{ji}\leq (|V|-1)x_{ij}.$
 For every $(i,j)\in E$, the constraints stipulate that each of $y_{ij}$ and $y_{ji}$ can carry flow, only when $x_{ij}$ is in the spanning tree.
 Finally, we ensure that the tree has $|V|-1$ edges: 
     $\sum\limits_{(i,j)\in E}x_{ij}= |V|-1.$
 
 \begin{example}[Collective Spanning Tree]\label{example:spanning}
Four cities have funding to improve their train connections, and need to decide which rails should become high-speed. Let $G = (V, E)$ represent the cities and the candidate rails, where $V = \{H,I,J,K\}$ and $E = \{(H,I),\allowbreak (H,K), (I,J), (I,K), (J,K)\}$. Thus, the agenda is $\A = \{a_{HI}, a_{HK}, a_{IJ}, a_{IK}, a_{JK}\}$, and $W = (1, 2, 4, 3, 2)$ is the monetary cost, in millions of euros. 
City $H$ may be interested in a faster connection with cities $I$ and $K$, 
 thus submitting ballot $B_H = (1,1,0,0,0)$.
A solution would need to select which connections to improve, making sure that all cities are connected by high-speed rails.
 \end{example}

\paragraph{Collective Scheduling.} In collective scheduling the agents submit transitive and asymmetric orderings over a set of jobs to be performed on a single machine, indicating their preferred order of execution, and a collective scheduling rule decides on the execution order of the jobs. Pascual et al.~\cite{PascualEtAlAAMAS2018} assume that the output schedule has no gaps and is complete: hence the setting is equivalent to the aggregation of orderings with weighted alternatives, where the weights correspond to the duration of each job.
If $P=\{p_1, \dots, p_m\}$ is a set of jobs to be scheduled, with execution times $t_x$ for job $p_x$, a \emph{scheduling agenda} is $\A=\{a_{x \prec y } \mid p_x \neq p_y \in P\}\cup \{a_{0 \prec x} \mid p_x \in P\}$, where the approval of $a_{x \prec y}$ shows support for $p_x$ being scheduled before $p_y$, and the approval of $a_{0 \prec x}$ means that $p_x$ should be scheduled first. Given an item $a_{x \prec y}$, its weight corresponds to $w_{p_y}$, i.e., the duration $t_y$ of the second scheduled job. In this way, the duration of the first job $p_x$ is captured by $w_{a_{0\prec x}}$, and accordingly for all subsequent jobs.

The outcome $X$ must be a linear order of the jobs, which can be easily formulated in ILP via feasibility constraints imposing transitivity and asymmetry of scheduled jobs, as well as a constraint stating that there is a unique starting job: $ \sum_{j\in [1,m]}X(a_{0\prec j}) =1$.

 
 Pascual et al.~\cite{PascualEtAlAAMAS2018} assume that $\RatB=\FeasB$, but in our setting weaker rationality constraints can be considered, such as agents submitting partial orders over the jobs.

\begin{example}[Program Schedule]\label{Ex:schedballot}
A faculty is scheduling the mandatory courses for the first year mathematics students. The faculty members have to decide the most beneficial ordering of courses for students. The courses are $\P=\{p_1,p_2,p_3,p_4\}$, with agenda $\A= \{ a_{0\prec 1}$,$ a_{0\prec 2},$ $a_{0\prec 3}$, $a_{0\prec 4}$, $a_{1\prec 2}$,$\ldots \}$. Professor $i$ believes that course $p_2$ should come first, that course $p_4$ should come last, and has no preference over the ordering between $p_1$ and $p_3$. Her ballot is thus $\B_i=(0,1,0,0,0, 0, 1, 1, 1, 1, 0, 0,1, 0, 0, 0)$. 
\end{example}


\subsection{CDO Rules in Judgment Aggregation}\label{sec:comparing}

We now show that each of the rules defined in the specific settings of participatory budgeting, collective scheduling, and collective networking, are all instances of the general rules we defined in Section~\ref{sec:CDOrules}. 
Recall that a rule $\R{}{}$ is an instance of rule $\R{'}{}$ if there exists a translation from the setting of $\R{}{}$ to the setting of $\R{'}{}$ such that the two rules give equivalent outcomes. 
In the following proposition we prove that three known rules from different settings are instances of the median rule ($\R{sum}{\textbf{\simple}}$).

\begin{proposition}\label{prop:MedianSameMAX}
The following rules are all instances of the median rule: 
\begin{itemize}
    \item the max rule with cardinality satisfaction measure by Talmon and Faliszewski~\cite{talmon2019framework},
    \item the maximum collective spanning tree by Darmann et al.~\cite{DarmannEtAlCOMSOC2008},
    \item the utilitarian aggregation with swap distance by Pascual et al.~\cite{PascualEtAlAAMAS2018}. 
\end{itemize}  
\end{proposition}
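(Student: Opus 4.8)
The strategy is to handle each of the three rules in turn, in each case exhibiting an explicit translation from the specialised setting into weighted judgment aggregation and checking that the objective optimised by the specialised rule agrees — up to an additive or multiplicative constant that is irrelevant under $\arg\max$ — with $\sum_{i\in\N}\textbf{\simple}_{B_i}(C)$ over the feasible set. For the embeddings of the three CDO problems I would reuse the agenda/constraint encodings already given in Section~\ref{sec:agendaconstraint} (selection agendas, networking agendas, scheduling agendas), so the only real work is matching the scoring functions.

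\medskip
\emph{Talmon–Faliszewski max rule with cardinality satisfaction.} Here I would note that the cardinality satisfaction of voter $i$ with respect to an outcome $C$ is exactly the number of projects approved by $i$ that are funded, i.e.\ $|B_i\cap C|$, which by definition is $\textbf{\simple}_{B_i}(C)$. The ``max'' aggregation sums these satisfactions over voters and maximises subject to the budget constraint, which is precisely $\R{sum}{\textbf{\simple}}$ on the selection agenda with $\FeasB=\{C : \sum_{a\in A}C(a)w_a\le \ell\}$. So this case is essentially immediate once the definitions are lined up.

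\medskip
\emph{Darmann et al.\ maximum collective spanning tree.} I would use the networking agenda from Section~\ref{sec:agendaconstraint}, with $\FeasB$ the set of spanning trees encoded by the single-commodity flow ILP. A voter's ballot approves a set of edges; the ``maximum'' collective spanning tree maximises the total number of (voter, edge) incidences with the chosen tree, i.e.\ $\sum_{i}|B_i\cap C| = \sum_i \textbf{\simple}_{B_i}(C)$, which is $\R{sum}{\textbf{\simple}}$. The only subtlety is to confirm that Darmann et al.'s objective is this sum of per-voter overlaps rather than, say, a weighted variant; assuming it is the (unweighted) count of satisfied approvals, the argument is the same as the previous case.

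\medskip
\emph{Pascual et al.\ utilitarian aggregation with swap distance.} This is the case I expect to be the main obstacle, because the specialised objective is phrased as \emph{minimising} total swap distance between the collective schedule and the individual orders, and one must see this as \emph{maximising} $\sum_i\textbf{\simple}_{B_i}(\cdot)$ on the scheduling agenda. I would argue as follows: an agent's ballot $B_i$ encodes a linear order via its pairwise atoms $a_{x\prec y}$ (plus the ``first job'' atoms); an outcome $C$ is also a linear order; and the swap distance between two linear orders on $m$ jobs equals the number of pairs on which they disagree. Since both orders are complete and antisymmetric, the number of agreeing ordered pairs plus the number of disagreeing ordered pairs is the fixed constant $\binom{m}{2}$ (and the ``first job'' atoms contribute a further fixed amount), so minimising disagreements is equivalent to maximising agreements, and the number of agreements on the pairwise atoms is exactly $|B_i\cap C|=\textbf{\simple}_{B_i}(C)$. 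Summing over agents and invoking that $\FeasB=\RatB$ is the set of linear orders, minimising total swap distance coincides with $\R{sum}{\textbf{\simple}}$ up to the additive constant $n\binom{m}{2}$ (appropriately adjusted for the starting-job atoms). Care is needed to make sure the bookkeeping of the $a_{0\prec x}$ atoms and of asymmetry is handled so the per-voter constant really is voter-independent; once that is checked, the $\arg\max$ is unchanged and the equivalence follows. Finally I would remark that all three arguments only ever change the objective by a voter-independent constant, which is why the outcomes — not merely the optima — coincide, giving the claimed instance relations.
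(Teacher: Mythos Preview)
Your treatment of the first two bullets is essentially identical to the paper's: both simply observe that the domain-specific objective is literally $\sum_i |B_i\cap C|$ over the appropriate feasible set (budget-respecting sets, respectively spanning trees), which is $\R{sum}{\textbf{\simple}}$ by definition, and then invoke Lemma~\ref{lem:equivalences} to identify this with the median rule.

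For the third bullet your approach diverges from the paper's and runs into a real obstacle. The paper does not carry out the direct computation on the scheduling agenda at all; it simply invokes two known facts in a chain: Pascual et al.\ themselves prove that their utilitarian-swap rule coincides with the Kemeny rule in preference aggregation, and the Kemeny rule is known to coincide with the median rule in judgment aggregation (via the standard pairwise-comparison embedding). This immediately gives the ``instance of'' relation without ever touching the $a_{0\prec x}$ atoms.

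Your direct argument on the scheduling agenda of Section~\ref{sec:agendaconstraint} does not go through as stated. The point you flag as needing ``care'' is genuinely problematic: the contribution of the starting-job atoms to $\textbf{\simple}_{B_i}(C)$ is $[f_i=f_C]$, where $f_i$ and $f_C$ are the first jobs in $B_i$ and $C$ respectively, so $\sum_{i\in\N}[f_i=f_C]$ equals the number of voters whose first job agrees with $C$'s. This is \emph{not} a voter-independent constant---it depends on $C$---and hence cannot be absorbed into the additive term $n\binom{m}{2}$. In principle this extra plurality-style bonus can shift the $\arg\max$ away from a Kemeny-optimal order, so on that particular agenda the equivalence you assert need not hold. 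The easy repair is either to adopt the paper's Kemeny shortcut, or to use a translation that omits the $a_{0\prec x}$ atoms altogether (a plain pairwise agenda with transitivity and asymmetry constraints suffices for the unweighted rule), after which your concordant-pairs computation goes through cleanly with the genuine constant $n\binom{m}{2}$.
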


\begin{proof}[sketch]
For each of these specific CDO domains we use the translation given in Section~\ref{sec:agendaconstraint}. Given  Lemma~\ref{lem:equivalences}, we show that the max rule with cardinality satisfaction measure by Talmon and Faliszewski~\cite{talmon2019framework} is an instance of $\R{sum}{\textbf{\simple}}$ as it is equivalent to the median rule. Both rules maximise the satisfaction of voters by counting the number of items approved in the candidate outcome and the ballots. Furthermore, the set of feasible outcomes is equivalent to those candidate outcomes that do not surpass the budget limit. The same holds for the maximum collective spanning tree by Darmann et al.~\cite{DarmannEtAlCOMSOC2008} given its translation, as feasible outcomes correspond to spanning trees. 

 Pascual et al.~\cite{PascualEtAlAAMAS2018} showed that their utilitarian aggregation rule with swap distance is equivalent to the Kemeny rule in preference aggregation, which is known to be equivalent to the median rule in judgment aggregation. 
\end{proof}

Further, we show that rules from CDO problems are instances of $\R{sum}{\CC}$.

\begin{proposition}\label{prop:maxcc}
 The Chamberlin-Courant rule for approval ballots  by  Skowron and Faliszewski~\cite{skowron2017chamberlin} and its generalisation to participatory budgeting by Talmon and Faliszewski~\cite{talmon2019framework} are both instances of $\R{sum}{\textbf{\CC}}$.
\end{proposition}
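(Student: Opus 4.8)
The plan is to reuse the embeddings of participatory budgeting --- and of approval-based multi-winner elections as its unit-cost special case --- into weighted judgment aggregation set up in Section~\ref{sec:agendaconstraint}, and then to check that under these embeddings the Chamberlin--Courant objective coincides, term by term, with the sum of the $\CC$ set scorings. No intermediate judgment aggregation rule is needed here, since $\R{sum}{\CC}$ is already, by definition, ``maximise the number of voters represented by at least one accepted item''.

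First I would treat the participatory-budgeting generalisation of Talmon and Faliszewski~\cite{talmon2019framework}. Take a selection agenda $\A=\{a_1,\dots,a_m\}$ with weights $w_{a_p}$ equal to the project costs, budget limit $\ell$, rationality constraint $\RatB=\{0,1\}^m$ (voters approve arbitrary subsets), and feasibility constraint $\sum_{a\in\A}X(a)\cdot w_a\le\ell$; each approval ballot of voter $i$ translates directly into $\B_i$. Their rule with the binary satisfaction measure picks a feasible bundle $C$ maximising the number of voters with at least one funded approved project, i.e.\ $\sum_{i\in\N}\mathbf{1}[\,|B_i\cap C|\ne0\,]$, which by definition of the $\CC$ set scoring is exactly $\sum_{i\in\N}\CC_{B_i}(C)$. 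Since the feasible bundles of the two settings are in bijection and the objective functions are literally equal, the two $\arg\max$ sets coincide, so the rule is an instance of $\R{sum}{\CC}$.

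Next I would handle the Chamberlin--Courant rule for approval ballots of Skowron and Faliszewski~\cite{skowron2017chamberlin}: this is the special case in which all items have unit weight and the budget constraint is replaced by the committee-size constraint $\sum_{a\in\A}X(a)=k$, a single linear equality and hence a legitimate feasibility constraint in our format. Candidates become agenda items and approval sets become ballots exactly as above, and a committee representing the maximum number of voters is precisely an element of $\arg\max_{C\in\FeasB}\sum_{i\in\N}\CC_{B_i}(C)$, giving the claim. The only points requiring care are bookkeeping ones: confirming that the committee-size constraint (an equality, possibly ``exactly $k$'' rather than ``at most $k$'') fits the linear-inequality format of our feasibility constraints, and lining up the ``binary satisfaction measure'' of~\cite{talmon2019framework} with the indicator $\mathbf{1}[|B_i\cap C|\ne0]$ rather than one of their other satisfaction notions. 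Once the definitions are matched the equivalence of objectives is immediate, so I do not expect a genuine obstacle; irresoluteness is a non-issue since both rules and $\R{sum}{\CC}$ are defined as $\arg\max$ sets.
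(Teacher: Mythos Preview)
Your proposal is correct and follows essentially the same route as the paper: both argue that Talmon and Faliszewski's rule with the binary satisfaction measure maximises the number of ballots having nonempty intersection with the outcome, which is by definition $\sum_{i\in\N}\CC_{B_i}(C)$, and that Skowron and Faliszewski's multi-winner rule is the unit-cost special case. Your write-up is in fact more explicit than the paper's sketch about the translation of the feasibility constraints and the bookkeeping around the committee-size equality, but the underlying argument is identical.
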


\begin{proof}[sketch]
Talmon and Faliszewski~\cite{talmon2019framework} propose a rule which outputs a set of items maximising the number of ballots with nonempty intersection with the outcome (a special case is the multi-winner variant by Skowron and Faliszewski~\cite{skowron2017chamberlin}).  Both are modeled by $\R{sum}{\CC}$, where the sum operator is paired with a set scoring giving each voter a score of one if her ballot shares an item with the outcome (and zero otherwise).
\end{proof}


We now show that four rules introduced in different settings are instances of $\R{egal}{\textbf{\simple}}$.

\begin{proposition}\label{prop:EgalInstanceRules}
The following rules are instances of $\R{egal}{\textbf{\simple}}$: 
\begin{itemize}
    \item the egalitarian median rule in judgment aggregation ($d_H$-max by Lang et al.~\cite{LangPSTV15}, or \textit{MaxHam} by Botan et al.~\cite{botan2021egalitarian})
    \item the egalitarian aggregation function with swap cost by Pascual et al.~\cite{PascualEtAlAAMAS2018},
    \item the maximin voter satisfaction problem for approval voting for spanning trees by Darmann et al.~\cite{darmann2009maximizing},
    \item the maximin approval voting rule for multiwinner elections by Brams et al.~\cite{brams2007minimax}.
\end{itemize}
\end{proposition}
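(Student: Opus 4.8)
The plan is to handle each of the four rules by composing the translation appropriate to its setting with the egalitarian operator and the $\textbf{\simple}$ set scoring, and in every case to reduce the claim to a single identity: the per-voter (dis)satisfaction used by the original rule is an affine function of $\textbf{\simple}_{B_i}(C)=|B_i\cap C|$ whose additive and multiplicative constants do not depend on the voter $i$. Whenever such an identity holds, maximising $\min_{i\in\N}|B_i\cap C|$ over the feasible outcomes returns exactly the same set as minimising the original $\max_i$-dissatisfaction (or maximising the original $\min_i$-satisfaction), so the two rules are instances of one another. The feasibility and rationality translations of Section~\ref{sec:agendaconstraint} (and of Lemma~\ref{lem:equivalences} for the judgment-aggregation case) already match the respective outcome spaces, so no extra work is needed on that side.

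First I would dispatch the two direct cases. For the maximin voter satisfaction rule for spanning trees of Darmann et al.~\cite{darmann2009maximizing} I use the networking translation of Section~\ref{sec:agendaconstraint}: $\FeasB$ is the set of spanning trees, and a voter's satisfaction with a tree $C$ is by definition the number of her approved edges contained in $C$, i.e.\ exactly $\textbf{\simple}_{B_i}(C)$; so the rule is literally $\R{egal}{\textbf{\simple}}$ restricted to that feasibility set. For the maximin approval voting rule of Brams et al.~\cite{brams2007minimax} I take $\FeasB$ to be the committees of a fixed size $k$ and let $B_i$ be voter $i$'s approval set; her satisfaction is again the number of her approved candidates that get elected, $|B_i\cap C|$, so the rule is $\R{egal}{\textbf{\simple}}$ on size-$k$ outcomes. (If ballots are additionally required to have size $k$, this coincides with the Hamming-minimax reading, since then $d_H(B_i,C)=2k-2|B_i\cap C|$, and argmax of $\min_i|B_i\cap C|$ equals argmin of $\max_i d_H(B_i,C)$.)

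Next I would treat the egalitarian median rule ($d_H$-max of Lang et al.~\cite{LangPSTV15}, \textit{MaxHam} of Botan et al.~\cite{botan2021egalitarian}) by reusing the issue-doubling translation from the left-to-right direction of Lemma~\ref{lem:equivalences}(i). Under it each ballot has exactly as many ones as there are judgment-aggregation issues, say $K$, and $|B_i\cap C|$ counts the issues on which voter $i$'s view $J_i$ agrees with the outcome $\boldsymbol{x}$, so $|B_i\cap C| = K - d_H(J_i,\boldsymbol{x})$. Since $K$ is voter-independent, $\arg\max_{C}\min_i|B_i\cap C| = \arg\min_{\boldsymbol{x}}\max_i d_H(J_i,\boldsymbol{x})$, which is exactly $d_H$-max; the reverse translation of Lemma~\ref{lem:equivalences}(i) supplies the opposite direction, although only the ``instance of'' direction is needed here.

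The scheduling rule of Pascual et al.~\cite{PascualEtAlAAMAS2018} is where the hard part lies. Using the scheduling translation of Section~\ref{sec:agendaconstraint}, every rational ballot is a total order and therefore approves exactly $\binom{m}{2}+1$ items; because $\swap_{B_i}(C)=\textbf{\simple}_{B_i}(C)-|B_i|$ and $|B_i|$ is now the same constant for all voters, $\R{egal}{\textbf{\simple}}$ and $\R{egal}{\swap}$ coincide on this domain, so it suffices to match $\R{egal}{\swap}$ with the egalitarian swap-cost rule. The number of pairwise items $a_{x\prec y}$ on which a ballot and a total-order outcome disagree is precisely the swap (Kendall-tau) distance between the two orders, which is what Pascual et al.'s rule uses. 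The delicate point — and the step I expect to be the main obstacle — is the auxiliary first-position items $a_{0\prec x}$: in a total order these are fully determined by the pairwise items, so they add only a bounded, voter-uniform correction to the scores, and I would argue, exactly as in the corresponding step of Proposition~\ref{prop:MedianSameMAX}, that this correction leaves the set of minimisers unchanged. Granting that, $\R{egal}{\textbf{\simple}}$ returns precisely the swap-cost-egalitarian schedules, completing the proof.
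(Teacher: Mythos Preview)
Your approach is essentially the paper's: for each of the four rules, apply the appropriate translation (from Section~\ref{sec:agendaconstraint} for scheduling, networking, and committee selection; the issue-doubling of Lemma~\ref{lem:equivalences}(i) for judgment aggregation) and then argue that the rule's per-voter score is an affine function of $|B_i\cap C|$ with voter-independent constants, so that the egalitarian optima coincide. The paper's sketch is terser---it records that all scheduling ballots have the same number of approvals and that the Darmann and Brams cases ``follow by definition''---but the underlying logic is identical.

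There is one genuine soft spot in your scheduling argument. You assert that the auxiliary items $a_{0\prec x}$ add only a ``voter-uniform'' correction, but this is false: for a total-order ballot with first job $f_i$ and a total-order outcome $C$ with first job $f_C$, the $a_{0\prec x}$-contribution to $|B_i\cap C|$ equals $[f_i=f_C]$, which depends on $i$. Hence $\textbf{\simple}_{B_i}(C)=\binom{m}{2}-K(B_i,C)+[f_i=f_C]$, where $K$ denotes the Kendall-tau distance, and maximising $\min_i$ of this quantity is not automatically the same as minimising $\max_i K(B_i,C)$. The easiest repair is to exploit the freedom in the ``instance of'' definition and take a translation that omits the $a_{0\prec x}$ items altogether (they are redundant when ballots and outcomes are total orders, being determined by the pairwise items); then $\textbf{\simple}_{B_i}(C)=\binom{m}{2}-K(B_i,C)$ exactly and your affine argument goes through cleanly. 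The paper's sketch glosses over the same point, asserting only that the difference is ``the same constant factor for each agent'' without isolating the $a_{0\prec x}$ items, so the translation you reach after this fix is plausibly what the full proof intends.
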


\begin{proof}[sketch]
The egalitarian median rule is an instance of $\R{egal}{\textbf{\simple}}$ as the functions for both rules are equivalent, given the translation from judgment aggregation to our setting (described in the proof sketch of statement (i) in Lemma~\ref{lem:equivalences}). The egalitarian aggregation function with swap costs is an instance of $\R{egal}{\textbf{\simple}}$ using the translation in Section~\ref{sec:agendaconstraint}: the rational scheduling ballots are such that each must contains the same number of approvals. For example, an agent must accept exactly one of $a_{p\prec q} $ and $a_{q\prec p}$. Subsequently, the swap cost for an agent's ballot with respect to a candidate outcome differs from the simple set scoring  by the same constant factor for each agent. Therefore, maximizing the minimum voters satisfaction yields the same result after translating. The latter rules follows by definition.
\end{proof}



Next we relate our ranked rules with known participatory budgeting rules. 

\begin{proposition}\label{prop:RankPairsSameRANK}
The greedy rule with cardinality satisfaction measure by Talmon and Faliszewski~\cite{talmon2019framework} is an instance of the ranked agenda rule ($\R{rank}{\textbf{\simple}}$), and the greedy rule with the generalised Chamberlin-Courant satisfaction measure by Talmon and Faliszewski~\cite{talmon2019framework} is an instance of $\R{rank}{\CC}$.
\end{proposition}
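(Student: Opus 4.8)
The plan is to unpack the definitions of the two greedy participatory budgeting rules of Talmon and Faliszewski and match them step-by-step against the execution of Algorithm~\ref{alg:ranked} on the translated instance. Recall from Section~\ref{sec:agendaconstraint} that a selection agenda $\A=\{a_1,\dots,a_m\}$ carries weights $w_{a_p}$ equal to project costs, the feasibility constraint is $\sum_{a\in A}X(a)\cdot w_a\le\ell$, and the rationality constraint is $\RatB=\{0,1\}^m$ (agents approve arbitrary project sets). Under this translation a greedy PB rule and $\R{rank}{\textbf{s}}$ operate on literally the same data, so the only thing to check is that the two iterative procedures select and discard the same items in the same order.

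First I would handle the cardinality satisfaction measure. Talmon and Faliszewski's greedy-with-cardinality rule, at each step, picks the not-yet-considered project maximising the marginal increase in total cardinality satisfaction $\sum_i |B_i\cap(\text{current outcome}+a)|$, adds it if the budget still allows, and otherwise skips it permanently. I would observe that the marginal gain of adding item $a$ to a partial outcome $X$ under $\textbf{\simple}$ is exactly $\sum_i \textbf{\simple}_{B_i}(X_{+a}) - \sum_i \textbf{\simple}_{B_i}(X) = |\{i : B_i(a)=1\}|$, i.e.\ the number of approvers of $a$, which is independent of $X$. Hence the $\arg\max$ in line~3 of Algorithm~\ref{alg:ranked} (which selects $\arg\max_a \textbf{s}_\Prof(X_{+a})$, the total support of $X_{+a}$) ranks items by approval count in exactly the same way as the greedy PB rule. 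The feasibility test in line~4 — is $X_{+x}$ restricted to the considered items extendable to a budget-feasible outcome? — for selection agendas reduces to: does the cumulative cost of already-accepted items plus $w_x$ stay within $\ell$? That is precisely the greedy rule's budget check. So the two runs coincide item-by-item; tie-breaking must be assumed fixed consistently across both rules, which I would state explicitly.

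For the generalised Chamberlin-Courant measure the argument is the same in structure but one must be slightly more careful, since the marginal gain of $a$ under $\CC$ is \emph{not} constant: adding $a$ increases $\sum_i\CC_{B_i}(X_{+a})$ by the number of voters who approve $a$ and whose ballots are so far disjoint from $X$. This is exactly the "marginal" version of the CC satisfaction used by Talmon and Faliszewski in their greedy-with-CC rule, so again the $\arg\max$ in line~3 — now over $\CC$-scores of $X_{+a}$, equivalently over marginal gains since $\textbf{s}_\Prof(X)$ is fixed within an iteration — agrees with their selection step, and the budget test matches as before. I would conclude by noting that in both cases, once an item fails the feasibility check it is moved into $S$ and never reconsidered, matching the "discard permanently" behaviour of the greedy PB rules, and that the algorithm terminating at $S=A$ corresponds to the PB rule having scanned all projects.

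The main obstacle is a bookkeeping one rather than a conceptual one: making the feasibility check in line~4 of Algorithm~\ref{alg:ranked} — phrased abstractly as "$X_{+x\upharpoonright(S\cup\{x\})}\in\FeasB_{\upharpoonright(S\cup\{x\})}$" — concretely equal to the greedy rule's running-cost-versus-budget comparison, and verifying that for a pure budget constraint the restriction-and-extendability condition is monotone enough that it never spuriously rejects an item that the greedy rule would have accepted (or vice versa). For a single linear $\le$ constraint with nonnegative weights this is immediate — a partial assignment extends to a feasible full assignment iff its accepted-cost is $\le\ell$, since unconsidered items can always be set to $0$ — but it is worth spelling out, as it is the one place where the abstract algorithm and the concrete PB rule could in principle diverge. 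A secondary, routine point is fixing a common tie-breaking convention so that the equivalence holds on the nose rather than only up to the set of tied outcomes.
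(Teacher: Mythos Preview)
Your proposal is correct and follows essentially the same approach as the paper's proof sketch: identify the $rank$ operator with Talmon and Faliszewski's greedy procedure, and match $\textbf{\simple}$ (resp.\ $\CC$) with their cardinality (resp.\ Chamberlin--Courant) satisfaction measure. Your version is considerably more detailed---spelling out the marginal-gain computations, the reduction of the abstract feasibility check in line~4 to the running-cost-versus-budget comparison, and the tie-breaking caveat---all of which the paper leaves implicit in its ``it is easy to see'' sketch.
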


\begin{proof}[sketch]
It is easy to see that the \emph{rank} operator behaves in the same way as their greedy rule, adding the item which increases the score (satisfaction measure) the most while not breaking the feasibility constraints (surpassing the budget limit). Furthermore, the cardinality satisfaction measure and the $\textbf{\simple}$ set scoring give the same outcome for every input. Therefore, the greedy rule with cardinality satisfaction measure by  Talmon and Faliszewski~\cite{talmon2019framework} is an instance of $\R{rank}{\textbf{\simple}}$. Moreover, since $\CC$ and the Chamberlin-Courant satisfaction measure assign the same score in the same situations, the last two rules in the statement will also give the same outcomes. 
\end{proof}




\section{Implementing and Testing CDO-Rules}\label{sec:ILP}

The general framework we propose can not only be used for theoretical comparisons of specific rules, but also to get a modular implementation of CDO-rules, where by simply plugging in the constraints one can focus on a particular application.
We report here on experiments that compare the behaviour of three CDO-rules on the setting of collective networking. We use integer linear programming (ILP) for our implementation, a successful formalism complemented by a vast literature and efficient solvers.\footnote{To the best of our knowledge, this is the first ILP formulation for judgment aggregation rules; the closest work being the one by de Haan and Slavkovik \cite{DeHaanSlavkovikIJCAI2019} via answer set programming.} 

\subsection{Experimental Design} \label{sec:exprisetup}

We focus on the collective networking problem (defined in Section~\ref{sec:agendaconstraint}) without a budget constraint, comparing the processing time of three rules: $\R{sum}{\textbf{\simple}}, \R{sum}{\CC}$, and $\R{egal}{\textbf{\simple}}$.
We do not study $\R{rank}{\textbf{s}}$ rules, as they are solvable in polynomial time (if the constraints can be checked in polynomial time).
Our implementation uses the open-source \emph{GNU Octave} software \citep{OCTAVE}, and its standard ILP solver \texttt{glpk}, using two-phase primal simplex method.
We emphasise that our implementation is modular and  can thus be easily adapted to account for any collective discrete optimisation problem.



In collective networking, agents vote on the edges $E$ of a connected network and the CDO-rule finds a collective spanning tree.
We generate 49 connected graphs $G=(V,E)$ with number of nodes varying between $6$ and $8$, i.e., $|V|\in [6,8]$. 
For each value of $|V|$ we generate connected graphs with $|E|\in \left[|V|-1, \frac{1}{2}|V|(|V|-1)\right]$: i.e., from trees to complete graphs.
We randomly generate 49 graphs, one for each pair $(|V|,|E|)$, as follows: 
starting with $|V|$ nodes, we mark one node as connected; while the graph is not connected, we randomly add an edge between a connected and an unconnected node, and we then add in the remaining $|E|-|V|+1$ edges. 


We let $|\N|=100$, and on each generated graph $G$ we create $10$ base profiles. 
 Each base profile $bp$ is an $n\times |E|$ matrix, where for each $i\in \N$ we have that $bp_i\in (0,1]^E$: 
 each item of the agenda is assigned a real number between $0$ and $1$ to represent the acceptance rate of an issue by an agent. Each base profile is then transformed into $9$ new profiles following a variant of the $p$-impartial culture model of Bredereck et al.~\cite{bredereck2019experimental}. 
According to this model, when generating approval voting ballots one can assume that every agent independently approves each item of the agenda with probability $p$. Thus, for each base profile 
we create one profile for every $p\in \{0.1,\ldots,0.9\}$
, such that voter $i$ approves an edge $e\in E$ if and only if the entry for $e$ in the base profile is at most $p$.

 \subsection{Experimental Results}\label{sec:experi}
We ran $\R{sum}{\textbf{\simple}}, \R{sum}{\CC}$, and $\R{egal}{\textbf{\simple}}$ on $4410$ instances ($49$ graphs, paired with $10$ base profiles and $9$ levels of $p$) comparing their processing times to compute an outcome.\footnote{Six instances were computed in parallel on an Intel i7 processor at 4.2 GHz with 4 physical and 8 logical cores and 32 GB of memory.} 
 %
 Figure~\ref{fig:PandVvary} shows, for $p\in \{0.2, 0.8\}$, the mean processing times over all profiles and all generated networks 
 with $|V|\in\{6,7,8\}$, using a log$_2$-scale on the $y$-axis. 

   \begin{figure}[t!]
     \centering
     \subfigure{\includegraphics[width=0.45\linewidth]{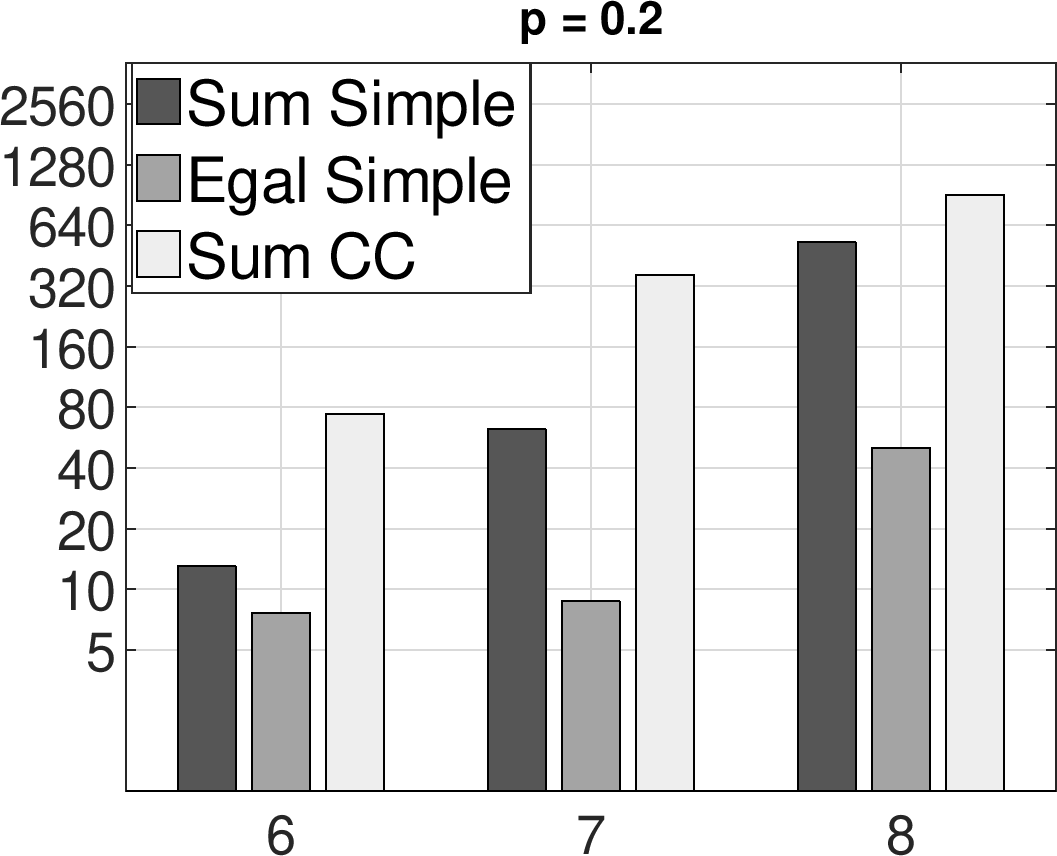}}\hfil
     \subfigure{\includegraphics[width=0.45\linewidth]{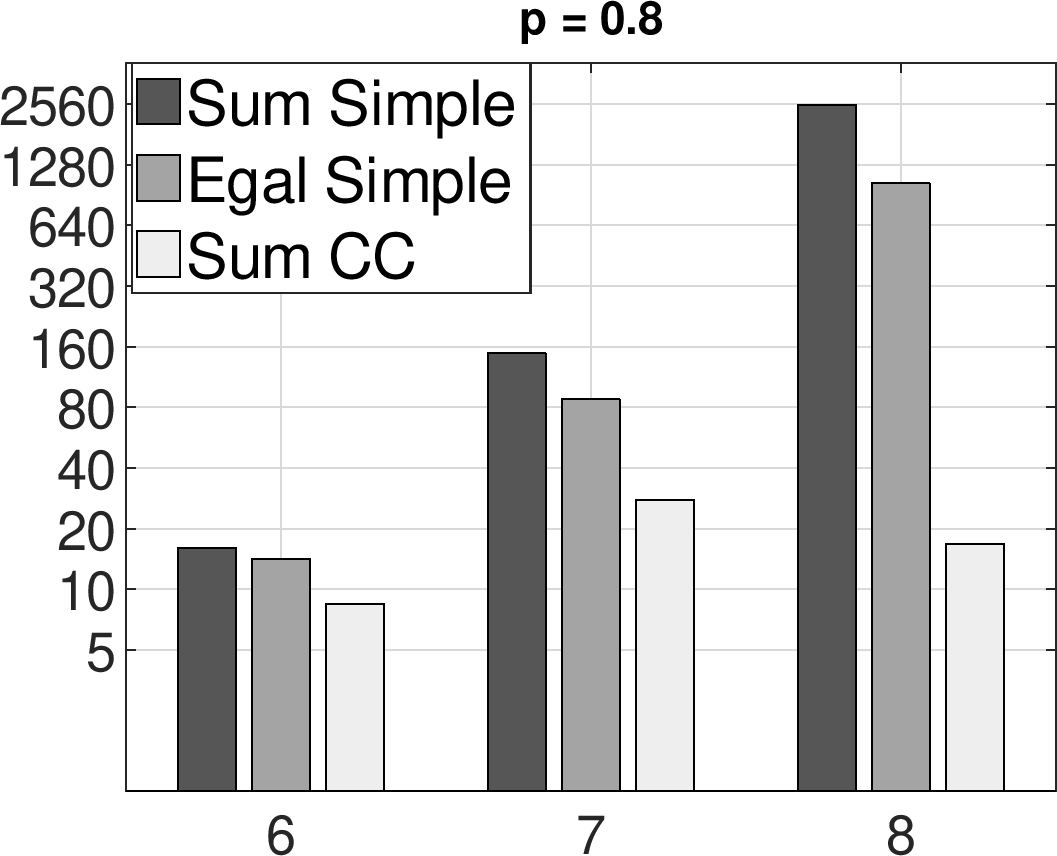}}
    \vspace{-3mm}
     \caption{Mean processing time for the $\R{sum}{\textbf{\simple}}, \R{egal}{\textbf{\simple}}$ and $\R{sum}{\CC}$ rules applied on the spanning tree problem with 
     $p\in \{0.2, 0.8\}$. The $x$ axis represents the number of nodes in the graph, 
     while the $y$ axis represents the mean processing time (milliseconds) on a 
     $\log_2$-scale.}
     \label{fig:PandVvary}
     \vspace{-3mm}
 \end{figure}

 We observe that the run-time of $\R{sum}{\CC}$ is inversely proportional to the acceptance level $p$, 
 confirming the intuition that finding a collective spanning tree with $\R{sum}{\CC}$ is more difficult with sparse ballots.
 Overall, $\R{sum}{\textbf{\simple}}$  is slower than the other two rules 
 (except for $\R{sum}{\CC}$ with small values for $p$). 
 Note that without additional budget constraints, the $\R{sum}{\textbf{\simple}}$ rule is equivalent to finding the maximum spanning tree where the edge's approvals are the weights. 
Finally, the run-time of $\R{egal}{\textbf{\simple}}$ increases steadily with the number of nodes for small values of $p$ and more rapidly for larger values. 
 This can be explained 
 by analysing the ILP formulation of $\R{egal}{\textbf{\simple}}$: it maximises the minimum score of any agent, which is bounded by the minimum number of items that any agents has approved and this bound is low when the acceptance of $p$ is low. 

Experiments for $|V| \geq 9$ resulted in some time-outs after $1200$ seconds. For $\R{sum}{\textbf{\simple}}$, these instances were almost  complete graphs paired with high $p$ values. For $\R{sum}{\CC}$ and $\R{egal}{\textbf{\simple}}$, the pattern is more complex and depends on graph and profile structures.

\section{Conclusions and Future Work}\label{sec:conclu}

Our primary contribution has been to 
bridge existing proposals of CDO problems and represent them in a unified framework, i.e.,
phrasing them in weighted judgment aggregation and defining modular rules to compute the outcome.
Thanks to our model we were able to prove how numerous existing algorithms defined for specific problems (participatory budgeting, collective scheduling, and collective networking) are actually related, as they are instances of one of our general CDO rules.
Our model thus establishes connections between these CDO problems, helping us to understand and solve them better. 
%
We also showcased a modular implementation of CDO rules, presenting an experimental comparison of three rules for the setting of collective networking.


Given the vast existing research in discrete optimisation, our proposed definition of collective discrete optimisation can pave the way for further general studies of such problems.
First, many of the specific rules we generalised are compared using the axiomatic method. While most of the axiomatic properties are motivated by domain-specific desiderata, there are arguably properties that can be studied at a more general level (the axiomatisation of the weighted median rule of  Nehring and Pivato~\cite{nehring2018median} is one such example).
Second, the computational complexity of CDO-rules can be studied at the general level in weighted judgment aggregation, aiming to identify islands of tractability by suitably restricting the set of constraints for specific problems. 
Finally, making our code accessible on an online platform would allow, for instance, a town hall employee to test different participatory budgeting rules on existing data, or a researcher in networks to easily test algorithms for collective spanning tree problems.




\end{document}